\theoremstyle{definition}
\newtheorem{proposition}{Proposition}
\newtheorem{lemma}{Lemma}
\newtheorem{definition}{Definition}
\newtheorem{example}{Example}
\numberwithin{equation}{section}
\title{
    Interpretable Meta-Learning \\ of Physical Systems}
\author{Matthieu Blanke
   \\
   Inria Paris, DI ENS, PSL Research University
   \\
\texttt{matthieu.blanke@inria.fr}
\And
Marc Lelarge
\\
Inria Paris, DI ENS, PSL Research University
\\
\texttt{marc.lelarge@inria.fr}
}
\begin{document}

\maketitle

\begin{abstract}
    Machine learning methods can be a valuable aid in the scientific process, but they need to face challenging settings where data come from inhomogeneous experimental conditions. Recently, meta-learning approaches have made significant progress in multi-task learning, but they rely on black-box neural networks, resulting in high computational costs and limited interpretability. Leveraging the structure of the learning problem, we argue that multi-environment generalization can be achieved using a simpler learning model, with an affine structure with respect to the learning task. Crucially, we prove that this architecture can identify the physical parameters of the system, enabling interpretable learning. We demonstrate the competitive generalization performance and the low computational cost of our method by comparing it to state-of-the-art algorithms on physical systems, ranging from toy models to complex, non-analytical systems. The interpretability of our method is illustrated with original applications to physical-parameter-induced adaptation and to adaptive control.
\end{abstract}

\section{Introduction}

Learning physical systems is an essential application of artificial intelligence that can unlock significant technological and societal progress. Physical systems are inherently complex, making them difficult to learn~\cite{karniadakis2021physics}. A particularly challenging and common scenario is multi-environment learning, where observations of a physical system are collected under inhomogeneous experimental conditions~\cite{caruana1997multitask}. In such cases, the scarcity of training data necessitates the development of robust learning algorithms that can efficiently handle environmental changes and make use of all available data.

This multi-environment learning problem falls within the framework of multi-task learning, which has been widely studied in the field of statistics since the 1990s~\citep{caruana1997multitask}. The aim is to exploit task diversity to learn a shared representation of the data and thus improve generalization. With the rise of deep learning, several meta-learning approaches have attempted in recent years to incorporate multi-task generalization into gradient-based training of deep neural networks.
In the seminal paper by~\cite{finn2017model}, and several variants that followed~\citep{zintgraf2019fast,raghu2019rapid}, this is done by integrating an inner gradient loop in the training process.
{Alternatively,~\cite{bertinetto2019meta} proposed adapting the weights using a closed-form solver.}
As far as physical systems are concerned, the majority of the proposed methods have focused on specific architectures oriented towards trajectory prediction~\citep{wang2022generalizing, pmlr-v162-kirchmeyer22a}.

When learning a physical system from data, a critical yet often overlooked challenge is model interpretability~\citep{lipton2018mythos, grojean2022lessons}. Interpreting the learned parameters in terms of the system's physical quantities is crucial to making the model more explainable, allowing for scientific discovery and downstream model-based applications such as control.
In a multi-task learning setting, the diversity in the learning environments should enable the identification of the physical parameters that vary across the tasks.

The above approaches benefit from the expressiveness of deep learning, but are costly in terms of computational time, both for learning and for inference.
Furthermore, the complexity and the black-box nature of neural networks hinder the interpretability of the learned parameters, even when the physical system is linearly parametrized. {Recently,~\cite{wang2021bridging} showed theoretically that the learning capabilities of gradient-based meta-learning algorithms could be matched by the simpler architecture of multi-task representation learning with hard parameter sharing, where the heads of a neural network are trained to adapt to multiple tasks~\citep{caruana1997multitask, ruder2017overview}. They also demonstrated empirically that this architecture is competitive against state-of-the-art gradient-based meta-learning algorithms for few-shot image classification.} 
We propose to use multi-task representation learning for physical systems, and show how it can bridge the gap between the power of neural networks and the interpretability of the model, with minimal computational costs.

\paragraph{Contributions} In this work, we study the problem of multi-environment learning of physical systems.
{We model the variability of physical systems with a multi-task representation learning architecture that is affine in task-specific parameters.}
{By exploiting the structure of the learning problem, we show how this architecture lends itself to multi-environment generalization, with considerably lower cost than complex meta-learning methods.}
Additionally, we show that it
enables identification of physical parameters for linearly parametrized systems, and local identification for arbitrary systems. Our method's generalization abilities and computational speed are experimentally validated on various physical systems and compared with the state of the art. The interpretability of our model is illustrated by applications to physical parameter-induced adaptation and to adaptive control.


\todo[inline]{Static systems}


\todo[inline]{ANIL, difficult computation even for depth-two networks~\citep{yuksel2023model}.}
\todo[inline]{context supervision, \cite{wang2022meta}}
\todo[inline]{Our approach is model agnostic}


\todo[inline]{Domain adaptation, transfer learning \cite{wang2022generalizing}}

\todo[inline]{For (linearly) structured observations, unstructured models fail}
\section{Learning from multiple physical environments}

In this section, we present the problem of multi-task learning as it occurs in the physical sciences and we summarize how it can be tackled with deep learning in a meta-learning framework.


\subsection{The variability of physical systems}

In general, a physical system is not fixed from one interaction to the next, as experimental conditions vary, whether in a controlled or uncontrolled way.
From a learning perspective, we assume a meta-dataset~${{D} :=  \cup_{t=1}^T D_t}$ composed of~$T$ datasets, each dataset gathering observations of the physical system under specific experimental conditions.
The goal is to learn a predictor from~$D$ that is robust to task changes, in the sense that when presented a new task, it can learn the underlying function from a few samples~\citep{hospedales2021meta}. Note that in practice the number of tasks~$T$ is typically very limited, owing to the high cost of running physical experiments.

For simplicity, we assume a classical supervised regression setting where~${D_t := \{x_{t}^{(i)},  y_{t}^{(i)} \}_{1\leq i \leq N_t}}$ and the goal is to learn a~$x \mapsto y$ predictor, although the approaches presented generalize to other settings such as trajectory prediction of dynamical systems.
We discuss two physical examples illustrating the need for multi-task learning algorithms, with different degrees of complexity.

\begin{example}[Actuated pendulum]
    \label{example:pendulum}
    We begin with the pendulum, one of physics' most famous toy systems. Denoting its inertia and its mass by~$I$ and~$m$ and the applied torque by~$u$, the angle~$q$ obeys
    \begin{equation}
        \label{eq:pendulum}
        I \ddot{q} + m g \sin q
        =
        u.
    \end{equation}
    For example, we may want to learn the action~$y=u$ as a function of the coordinates~${x=(q, \dot{q}, \ddot{q})}$.
    In a data-driven framework, the trajectories collected may show variations in the pendulum parameters: the same equation~\eqref{eq:pendulum} holds true, albeit with different parameters~$m$ and~$I$.
\end{example}
A more complex, non-analytical example is that of learning the solution to a partial differential equation, which is rarely known in closed form and varies strongly according to the boundary conditions.
\begin{example}[Electrostatic potential]
    \label{example:potential}
    The electrostatic potential~$y$ in a space~$\Omega$ devoid of charges solves~Laplace's equation, with boundary conditions
    \begin{equation}
        \label{eq:pde}
        \Delta y  =
        0
        \quad \text{on} \; \Omega, \qquad
        y(x) = b(x)  \quad \text{on} \; \partial \Omega.
    \end{equation}
    A robust data-driven solver should be able to generalize to (at least small) changes of~$\partial \Omega$ and~$b$.
\end{example}


\subsection{Overview of
{multi-environment deep learning}}
\label{section:overview}

Multi-task statistical learning has a long history, and several approaches to this problem have been proposed in the statistics community~\citep{caruana1997multitask}.
We will focus on the meta-learning paradigm~\citep{hospedales2021meta}, which has recently gained considerable importance and whose application to neural nets looks promising given the complexity of physical systems.
We next describe the generic structure of
 meta-learning algorithms for multi-task generalization. The goal is to obtain a~$x \mapsto y$ mapping in the form of a two-fold function~$y\simeq f(x;w)$, where~$w$ is a tunable task-specific weight that models the environment variations.

\paragraph{Learning model}
Given the
{ learning capabilities}
 of neural networks,
incorporating multi-task generalization into their gradient descent training algorithms is a major challenge. 
Since the seminal paper by~\cite{finn2017model}, several algorithms have been proposed for this purpose, with the common idea of finding a map adapting the weights of the neural network according to task data. A convenient point of view is to introduce a two-fold parametrization of a meta-model~$F(x;\theta,w)$, with a task-agnostic parameter vector~$\theta \in \R^p$ and task-specific weights~$w$~(also called learning contexts). For each task~$t$, the task-specific weight is computed based on some trainable meta-parameters~$\pi$ and the task data currently being processed as~$w_t := {A}(\pi, D_t)$, according to an adaptation rule~$A$ that is differentiable with respect to~$\pi$. The meta-parameters are trained to minimize the meta-loss function aggregated over the tasks, as we will see below.
In this formalism, a meta-learning algorithm is determined by the meta-model~$F(x;\theta,w)$ and the adaptation rule~$A$.

We provide examples of recent architectures in~Table~\ref{table:architectures}. In MAML~\citep{finn2017model}, the meta-parameter~$\pi$ is simply~$\theta$ and the adaptation rule is computed as a gradient step in the direction of the task-specific loss improvement, in an inner gradient loop. In~CoDA~\citep{pmlr-v162-kirchmeyer22a}, the meta-parameter~$\pi$ has a dimension growing with the number of tasks $t$ and the adaptation rule is computed directly from the meta-parameters, with task-specific low-dimensional context vectors~$\xi_t \in \R^{d_\xi}$ and a linear hypernetwork~${\Theta \in \R^{p \times d_\xi}}$. Variants of~MAML,~CAVIA~\citep{zintgraf2019fast} and~ANIL~\citep{raghu2019rapid}, fit into this scheme as well and correspond to the restriction of the adaptation inner gradient loop to a predetermined set of the network's weights. This framework also encompasses
{{the CAMEL algorithm}, which we introduce in~Section~\ref{section:model}}.

\paragraph{Meta-training}
The training process is summarized in~Algorithm~\ref{algorithm:meta-training}.
For each task~$t$, the meta-learner computes a task-specific version of the model from the task dataset~$D_t$, defining~
${f_t(x; \pi) : = F(x ;  \theta, A(\pi, D_t))}$.
The error on the dataset~$D_t$ is measured by the task-specific~loss
\vspace{-0.1cm}
\begin{equation}
    \label{eq:task-loss}
    \ell(D_t; \theta, w) =
  \sum\limits_{x, y \, \in D_{t} }\frac{1}{2} \big( F(x ; \theta, w) - y \big)^2.
\end{equation}
Parameters~$\pi$ are trained by gradient descent in order to minimize the regularized meta-loss defined as the aggregation of~$L_t$ and a regularization term~$R(\pi)$:
\vspace{-0.1cm}
\begin{equation}
    \label{eq:meta-loss}
    L(\pi) := \sum\limits_{t=1}^T
    \ell\big(D_t; \theta , w_t( \pi) \big)
    + R(\pi).
\end{equation}
\begin{minipage}[c]{.447\linewidth}
    \begin{algorithm}[H]
        \caption{Gradient-based meta-training}
        \label{algorithm:meta-training}
        \begin{algorithmic}
            \State \textbf{input}
            meta-model~$F(x ; \theta, w)$,
            adaptation rule~$A$,
            initial meta-parameters~$\pi$,
            learning rate~$\eta$,
            task datasets~$D_1, \dots D_T$
            \State \textbf{output} learned meta-parameters~$\bar{\pi}$
            \While{not converged}
            \For{tasks $1 \leq t \leq T$}
            \State  compute $\theta $ from $\pi$
            \State  adapt $w_t :=A(\pi, D_t)$
            \State  compute $\ell\big(D_t; \theta, w_t(\pi)\big)$

            \EndFor
            \State  {compute} $\displaystyle L(\pi)$,\; as in~\eqref{eq:meta-loss}
            \State  {update} $\pi \leftarrow \pi - \eta \nabla L (\pi)$
            \EndWhile
        \end{algorithmic}
    \end{algorithm}
\end{minipage}
\,
\begin{minipage}[c]{.53\linewidth}
    \centering
    \captionof{table}{Structure of various meta-learning models. Here~$h(x;\theta)\in \R$ and~$v(x;\theta) \in \R^r$ denote arbitrary parametric models, such as neural networks; ~``order" stands for differentiation order.
    }
    \label{table:architectures}
    \resizebox{\textwidth}{!}{
        \begin{tabular}{c||c|c|c}
                       & MAML                          & CoDA                        & CAMEL
            \\
            \hline
            $\pi$      & \multicolumn{1}{c|}{$\theta$} & $\theta, \Theta, \{\xi_t\}$ & $\theta, \{\omega_t\}$
            \\
            \hline
            $\mathrm{dim} (\pi)$
            \rule{0pt}{2.2ex}
                       & $p$                           &
            ${p {+} p{\times} d_\xi {+} d_\xi{\times} T}$
                       & $p {+} r {\times} T$
            \\
            \hline
            $\mathrm{dim}(w)$
                       &
            \multicolumn{2}{c|}
            {  $p$}
                       &
            {$r$ }
            \\
            \hline
            ${A}(\pi, D_t)$
            \rule{0pt}{2.2ex}
                       & $-\alpha \nabla_\theta L_t$   & $ \Theta\xi_t$              & $\omega_t $
            \\
            \hline
            {$F(x ; \theta, w)$}
            \rule{0pt}{2.2ex}
                       &
            \multicolumn{2}{c|}
            {  $h(x; \theta + w )$}
                       &
            {$\transp{w} v(x ; \theta)$ }
            \\
            \hline
            training   & \multirow{2}{*}{2}            & \multirow{2}{*}{1}          & \multirow{2}{*}{1}
            \\
            order      &                               &                             &
            \\
            \hline
            adaptation & \multirow{2}{*}{1}            & \multirow{2}{*}{1}          & \multirow{2}{*}{0}
            \\
            order      &                               &                             &
            \\
            \hline
        \end{tabular}
    }
\end{minipage}

\paragraph{Test-time adaptation}
Once training is complete, the trained meta-parameters~$\bar{\pi}$ define a tunable model~${f(x;w) := F(x; \bar{\theta}, w)}$, where~$\bar{\theta}$ is the trained task-agnostic parameter vector.
At test time, the trained meta-model is presented with a dataset~$D_{T+1}$ consisting of few samples~(or shots) from a new task. Using this adaptation data,~$\bar{\theta}$ is frozen and
the task-specific weight~$w$ is tuned~(possibly in a constrained set) by minimizing the prediction error on the adaptation dataset:
\begin{equation}
    \label{eq:adaptation_problem}
    w_{T+1} \in \underset{w}{\mathrm{argmin}}
    \; \ell\big(D_{T+1}; \bar{\theta} , w \big).
\end{equation}
In all the above approaches, this minimization is performed by gradient descent.
The resulting adapted predictor is defined as~$F (x ;\bar{\theta}, w_{T+1})$.
The meta-learning algorithm is then evaluated by the performance of the adapted predictor on new samples from task~$T+1$.

{
    \paragraph{Computational cost}
    The inner-loop gradient-based adaptation used in~MAML and its variants suffers from the computational cost of second-order optimization, since Hessian-vector products are computed in numbers proportional to the number of tasks.
    Furthermore, the cost of gradient-based adaptation at test time can also be crucial, especially for real-time applications where the trained model must be adapted at high frequency.
}

\section{
    Context-Affine Multi-Environment Learning}
\label{section:model}
{
Physical systems often have a particular structure in the form of mathematical models and equations. The general idea behind model-based machine learning is to exploit the available structure to increase learning performance and minimize computational costs~\citep{karniadakis2021physics}. With this in mind, we adopt in this section a 
simpler architecture than those shown above, and show how it lends itself particularly well to learning physical systems.

\paragraph{Problem structure}
We note that many equations in physics exhibit an affine task dependence, since the varying physical parameters often are linear coefficients~(as we see in~Example~\ref{example:pendulum}, and we shall further explain in~Section~\ref{section:identification}). By incorporating this same structure and hence mimicking physical equations, the model should be well-suited for learning them and for interpreting the physical parameters.
        Following these intuitions, we propose to learn multi-environment physical systems with affine task-specific context parameters.
}

\begin{definition}[
    {Context-affine multi-task learning}]
    \label{definition:task-linear}
    The prediction is modeled as an affine function of low-dimensional task-specific weights~$w \in \R^r$ with a task-agnostic feature map~${v(x ;\theta) \in \R^r}$ and a task-agnostic bias~$c(x;\theta) \in \R$:
    \begin{equation}
        \label{eq:task-linear_model}
        F(x ; \theta, w) = c(x; \theta) + \transp{w}v(x; \theta).
    \end{equation}
    The dimension~$r$ of the task weight must be chosen carefully. It must be larger than the estimated number of physical parameters varying from task to task but smaller than the number of training tasks, so as to observe the function~$v$ projected over a sufficient number of directions. During training, the task-specific weights are directly trained as meta-parameters along with the shared parameter vector:~$\pi = (\theta, \omega_1 \dots, \omega_T)$ and $w_t = A(\pi, D_t) = \omega_t$.
The meta-parameters are jointly trained by gradient descent as in~Algorithm~\ref{algorithm:meta-training}.
At test time, the minimization problem of adaptation~\eqref{eq:adaptation_problem} reduces to ordinary least squares.
\end{definition}

{
The architecture introduced in Definition~\ref{definition:task-linear} is equivalent to multi-task representation learning with hard parameter sharing~\cite{ruder2017overview} and is proposed as a meta-learning algorithm in~\citep{wang2021bridging}
  We will refer to it
  in our physical system framework as~Context-Affine Multi-Environment Learning~(CAMEL).
In this work, we show that~CAMEL is particularly relevant for learning physical systems.
Table~\ref{table:architectures} compares~CAMEL with the meta-learning algorithms described above.
}

\paragraph{Computational benefits}
As the task weights~$(\omega_t)_{t=1}^T$ are kept in memory during training instead of being computed in an inner loop,~CAMEL can be trained at minimal computational cost. In particular, it does not need to compute Hessian-vector products as in~MAML,
{
or to propagate gradients through matrix inversions as in~\citep{bertinetto2019meta}. The latter operations can be prohibitively costly in our physical modeling framework, where the number of data points~$N_t$ is large~(it is typically the size of a high-resolution sampling grid, or the number of samples in a trajectory).
Adaptation at test time is also computationally inexpensive since ordinary least squares guarantees a unique solution in closed form, as long as the number of samples exceeds the dimension $r$ of the task weight.}
For real-time applications, the online least-squares formula~\citep{kushner2003stochastic} ensures adaptation with minimal memory and compute requirements, whereas gradient-based adaptation {(as in CoDA or in MAML)} can be excessively slow.

\paragraph{Applicability}
The { meta-learning} models described in~Section~\ref{section:overview} seek to learn multi-task data from a complex parametric model (typically a neural network), making the structural assumption that the weights vary slightly around a central value in parameter space:~${f_t(x; {\pi}) = h(x ; \theta_0 + \delta \theta_t)}$, with~$\Vert \delta \theta \Vert \ll \Vert \theta_0 \Vert$.  Extending this reasoning, the model should be close to its linear approximation:
\begin{equation}
    \label{eq:model_expansion}
    h(x ; \theta_0 + \delta \theta_t) \simeq h(x; \theta_0) +
    \transp{\delta \theta_t} \nabla h(x; \theta_0),
\end{equation}
where we observe that the output is an affine function of the task-specific component~$\delta \theta_t$. We believe that~\eqref{eq:model_expansion} explains the observation that~MAML mainly adapts the last layer of the neural network~\citep{raghu2019rapid}.
In~Definition~\ref{definition:task-linear},~$v$ and~$c$ are arbitrary parametric models, which can be as complex as a deep neural network and are trained to learn a representation that is linear in the task weights. Following~\eqref{eq:model_expansion}, we expect~CAMEL's expressivity to be of the same order as that of more complex architectures, with~$c(x;\theta)$,~$w_t$ and~$v(x; \theta)$ playing the roles of~$h(x;\theta_0)$,~$\delta\theta_t$ and~$\nabla h(x; \theta)$ respectively.     
Another key advantage of~CAMEL is the interpretability of the model, which we describe next.






\section{Interpretability and system identification}
\label{section:identification}

The observations of a physical system are often known to depend on certain well-identified physical quantities that may be of critical importance in the scientific process. When modeling the system in a data-driven approach, it is desirable for the trained model parameters to be interpretable in terms of these physical quantities~\citep{karniadakis2021physics}, thus ensuring controlled and explainable learning~\citep{e23010018}. We here focus on the identification of task-varying physical parameters, which raises the question of the identifiability of the learned task-specific weights.
System identification and model identifiability are key issues when learning a system~\citep{ljung1998system}.
Although deep neural networks are becoming increasingly popular for modeling physical systems, their complex structure makes them impractical for parameter identification in general~\citep{nelles2001nonlinear}.

\paragraph{Physical context identification}
In mathematical terms, the observed output~$y$ is considered as an unknown function~$f_\star(x ; \varphi)$ of the input and a physical context vector~$\varphi \in \R^n$, gathering the parameters of the system. In our multi-environment setting, each task is defined by a vector~$\varphi_t$
as~$y(x; t) = f_\star(x, \varphi_t)$.
At test time, a new environment corresponds to an unknown underlying physical context~$\varphi_{T+1}$.
While adaptation consists in minimizing the prediction error on the data as in~\eqref{eq:adaptation_problem}, the interpretation goes further and seeks to identify~$\varphi_{T+1}$.
This means mapping the learned task-specific weights~$w$ to the physical contexts~$\varphi$, \textit{i.e.} learning an estimator~$\hat{\varphi}: w \mapsto \varphi$ using the training data and the trained model.
Assuming that the physical parameters of the training data~$\{ \varphi_t \}$ are known,
this can be viewed as a regression problem with~$T$ samples, where~$\hat{\varphi}$ is trained to predict~$\varphi_t$ from weights~$w_t$ learned on the training meta-dataset.

\subsection{Linearly parametrized systems}


We are primarily interested in the case where the physical parameters are known to intervene linearly in the system equation, as
\begin{equation}
    \label{eq:task-linear_system}
    f_\star(x;\varphi):= \kappa(x) + \transp{\varphi}\nu(x), \quad \nu(x) \in \R^n.
\end{equation}
This class of systems is of crucial importance: although simple, it covers a large number of problems of interest, as the following examples illustrate. Furthermore, it can apply locally to more general system, as we shall see later.

\begin{example}[Electric point charges]
    \label{example:charges}
    Point charges are a particular case of~Example~\ref{example:potential} with point boundary conditions, proportional to the charges~$\varphi = (\varphi^{(1)}, \dots, \varphi^{(n)})$. The resulting field can be computed using~Coulomb's law and is proportional to these charges:
    $f_\star(x; \varphi) = \transp{\varphi}\nu(x)$,
    with~${\nu(x)\propto ({1}/{\Vert x-x^{(j)}\Vert})_j}$. Although the solution is known in closed form, this example can illustrate more complex problems where an analytical solution is out of reach (and hence~$\nu$ is unknown) but the linear dependence on certain well-identified parameters is postulated or known.
\end{example}
\begin{example}[Inverse dynamics in robotics]
    \label{example:inverse-dynamics}
    The Euler-Lagrange formulation for the rigid body dynamics has the form
    \begin{equation}
        \label{eq:manipulator}
        M(q)\ddot{q} + C(q, \dot{q})\dot{q} +g(q) = B u,
    \end{equation}
    where $q$ is the generalized coordinate vector,~$M$ is the mass matrix,~$C$ is the Coriolis force matrix,~$g(q)$ is the gravity vector and the matrix~$B$ maps the input~$u$ into generalized forces~\citep{underactuated}. It can be shown that~\eqref{eq:manipulator} is linear with respect to the system's dynamic parameters~\citep{nguyen2010using}, and hence takes the form of~\eqref{eq:task-linear_system} for scalar controls. A simple, yet illustrative system with this structure is the actuated pendulum~\eqref{eq:pendulum}, where it is clear that the equation
    is linear in the inertial parameters~$I$ and~$m$. The inverse dynamics equation can be used for trajectory tracking~\citep{spong2020robot}, as it predicts~$u$ from a target trajectory~$\{ q(s) \}$~(see~Appendix~\ref{appendix:inverse-dynamics-control}).
\end{example}
\subsection{Locally linear physical contexts}

In the absence of prior knowledge about the system under study, the most reasonable structural assumption for multi-task data is to postulate small variations in the system parameter:~${\varphi = \varphi_0 + \delta \varphi}$. The learned function can then be expanded and found to be locally linear in physical contexts:
\begin{equation}
    \label{eq:locally-linear}
    f_\star(x;\varphi) \simeq f_\star(x;\varphi_0) + \transp{\delta \varphi}\nabla f_\star(x ; \varphi_0),
\end{equation}
which has the form~\eqref{eq:task-linear_system} with~$\kappa(x) = f_\star(x; \varphi_0)$ and~$\nu(x) = \nabla f_\star(x ; \varphi_0)$.
\begin{example}[Identification of boundary perturbations]
    \label{example:local-boundary}
    For a general boundary value problem such as~\eqref{eq:pde}, we may assume that the boundary conditions~$\partial \Omega(\varphi), b(x, \varphi)$ vary smoothly according to parameters~$\varphi$ (such as angles or displacements). If these variations are small and the problem is sufficiently regular, the resulting solution~$f_\star(x, \varphi)$ can be reasonably approximated by~\eqref{eq:locally-linear}.
\end{example}

\subsection{System identification with CAMEL}
\label{section:camel_identification}

We now study the problem of system identification under the assumption of parameter linearity~\eqref{eq:task-linear_system} using the~CAMEL metamodel~\eqref{eq:task-linear_model}.
We study the identifiability of the model and therefore investigate the vanishing training loss limit, with~$c=\kappa=0$ for simplicity, yielding
\begin{equation}
    \transp{\omega_t} v(x^{(i)}_t) = \transp{\varphi_t} \nu(x^{(i)}_t) \quad \text{for all} \quad 1\leq t \leq T, \; 1 \leq i \leq N_t.
\end{equation}
\paragraph{Identifiability}
Posed as it is,
we can easily see that the~physical parameters~$\varphi_t$ are not directly identifiable. Indeed, for any~$P\in\mathrm{GL}_r(\R)$, the weights~$\omega$ and the feature map~$v$ produce the same data as the weights~$\omega':=\transp{P}\omega$ and the feature map~${v' = P^{-1}v}$, since~$\transp{\omega}v=\transp{\omega} PP^{-1}v$. This problem is related to that of identification in matrix factorization~(see for example~\cite{fu2018identifiability}).
Now that we have recognized this symmetry of the problem, we can ask whether it characterizes the solutions found by~CAMEL. The following result provides a positive answer.

\begin{proposition}
    \label{proposition:identification}
    Assume that the training points are uniform across tasks:~$x_{t}^{(i)} = x^{(i)}$, and~${N_t=N}$ for all~$1 \leq t \leq T$ and~$1 \leq i \leq N${, with ~$n\leq r<N, T$. Assume that both sets~$\{ \nu(x^{(i)}) \}$ and~$\{ \varphi_t \}$} span~$\R^{n}$. In the limit of a vanishing training loss~$L(\pi)=0$, the trained meta-parameters recover the parameters of the system up to a linear transform: there exist~$P, Q\in \R^{n \times r}$ such that~$\varphi_t = P \omega_t$ for all training task~$t$ and~${{\nu}(x^{(i)}) = Q v(x^{(i)})}$ for all~$1 \leq i \leq N$. Additionally,~$Q\transp{P} = I_n$.
\end{proposition}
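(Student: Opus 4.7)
The natural move is to repackage the pointwise identity into a single matrix equation. Stack the feature vectors and task weights into
\[
V := [v(x^{(1)}) \mid \cdots \mid v(x^{(N)})] \in \R^{r\times N}, \quad W := [\omega_1 \mid \cdots \mid \omega_T] \in \R^{r \times T},
\]
and the ground-truth analogues
\[
M := [\nu(x^{(1)}) \mid \cdots \mid \nu(x^{(N)})] \in \R^{n\times N}, \quad \Phi := [\varphi_1 \mid \cdots \mid \varphi_T] \in \R^{n\times T}.
\]
The vanishing-loss hypothesis together with $c=\kappa=0$ is exactly $W^\top V = \Phi^\top M$ as $T\times N$ matrices. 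The spanning assumptions say that $M$ and $\Phi$ both have rank $n$, so the Gram matrices $MM^\top$ and $\Phi \Phi^\top$ are invertible $n\times n$ matrices; this is where the hypotheses are used, and it is the only analytic input.

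Given that, the construction of $P$ and $Q$ is just a one-sided inversion. Right-multiplying the identity $W^\top V = \Phi^\top M$ by $M^\top (MM^\top)^{-1}$ gives $\Phi^\top = W^\top V M^\top (MM^\top)^{-1}$, so setting
\[
P := (MM^\top)^{-1} M V^\top \in \R^{n\times r}
\]
yields $\Phi = PW$, i.e.\ $\varphi_t = P\omega_t$ for every training task $t$. Symmetrically, left-multiplying by $(\Phi\Phi^\top)^{-1}\Phi$ gives $M = (\Phi\Phi^\top)^{-1}\Phi W^\top V$, and setting
\[
Q := (\Phi\Phi^\top)^{-1}\Phi W^\top \in \R^{n\times r}
\]
yields $M = QV$, i.e.\ $\nu(x^{(i)}) = Q v(x^{(i)})$ for every training point.

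It remains to verify $Q P^\top = I_n$. Plugging in the two formulas and using symmetry of $(MM^\top)^{-1}$,
\[
Q P^\top = (\Phi\Phi^\top)^{-1} \Phi W^\top V M^\top (MM^\top)^{-1} = (\Phi\Phi^\top)^{-1} \Phi \Phi^\top M M^\top (MM^\top)^{-1} = I_n,
\]
where the middle equality substitutes $W^\top V = \Phi^\top M$ once more. There is no real obstacle; the only subtle point to check is that the two Gram matrices are genuinely invertible, which is precisely why the statement requires the spanning hypotheses (and why $T,N \geq n$ is needed). The bounds $r < T,N$ in the statement are not used in identifying $P, Q$ themselves, but they ensure that the minimal rank $r$ is feasible and that $\omega_t, v(x^{(i)})$ could in principle span $\R^r$, which is the natural regime in which the $n \times r$ linear map to physical parameters is meaningful.
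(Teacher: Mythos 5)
Your proof is correct and follows essentially the same route as the paper's: the paper proves a small symmetry lemma by writing the vanishing-loss condition as $V\transp{W}=V'\transp{W'}$ and left-multiplying by the one-sided inverse $(V'^\top V')^{-1}V'^\top$ of the full-rank ground-truth matrices, which is exactly your right/left multiplication by $M^\top(MM^\top)^{-1}$ and $(\Phi\Phi^\top)^{-1}\Phi$ up to a transposed stacking convention. Your verification of $QP^\top=I_n$ by substituting the matrix identity once more also matches the paper's computation, and your remark that only the spanning (full-rank) hypotheses on $\{\nu(x^{(i)})\}$ and $\{\varphi_t\}$ are actually used is consistent with the paper's lemma.
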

{
A proof is provided in~Appendix~\ref{appendix:proof}, along with the case~$c \neq \kappa$.
}
Proposition~\ref{proposition:identification} shows that CAMEL learns a meaningful representation of the system's features instead of overfitting the examples from the training tasks. Remarkably, the relationship between the learned weights and the system parameters is linear and can be estimated using ordinary least squares
\begin{equation}
    \label{eq:linear_identification}
    \hat{\varphi}(\omega) = \hat{P} \omega, \quad \hat{P} \in \underset{P \in \R^{n \times r}}{\mathrm{argmin}} \; \frac{1}{2}\sum\limits_{t=1}^T  \Vert P \omega_t - \varphi_t \Vert_2^2.
\end{equation}
Although the relationship between the model and the system, in general, is likely to be complex, especially when deep neural networks are used, the structure of our model and the linear physical contexts enable the derivation of the problem symmetries and the computation of an estimator of the physical parameters.
For black-box meta-learning architectures, exhibiting the symmetries in model parameters and computing an identification map seems out of reach, as the number of available tasks~$T$ can be very limited in practice~\citep{pourzanjani2017improving}.
\vspace{-.1cm}
\paragraph{Zero-shot adaptation}
Looking at the problem from another angle,~Proposition~\ref{proposition:identification} also shows that~$\omega$ can be estimated linearly as a function of~$\varphi$, at least when~$r=n$~(which ensures that~$P$ is nonsingular). Computing an estimator of~$\omega$ as a function of~$\varphi$ with the inverse regression to~\eqref{eq:linear_identification} enables a zero-shot~(or physical parameter-induced) adaptation scenario: when an estimate of the physical parameters of the new environment is known a priori, a value for the model weights can be inferred. We call this adaptation method~$\varphi$-CAMEL.


\section{Experimenting on physical systems}
\label{section:experiments}

The architecture that we have presented is expected to adapt efficiently to the prediction of new environments, and identify (locally or globally) their physical parameters, as shown in~Section~\ref{section:identification}.
In this section, we validate these statements experimentally on various physical systems: Sections~\ref{section:charges} and~\ref{section:control} deal with systems with linear parameters~(as in~\eqref{eq:task-linear_system}), on which we evaluate the interpretability of the algorithms. We then examine a non-analytical, general system in~Section~\ref{section:capacitor}. We compare the performances of~CAMEL and its zero-shot adaptation version~$\varphi$-CAMEL introduced in~Section~\ref{section:camel_identification} with state-of-the-art meta-learning algorithms. Our code and demonstration material are available at~\url{https://github.com/MB-29/CAMEL}.

\paragraph{Baselines}
We have implemented the MAML algorithm of~\cite{finn2017model}, and its ANIL variant~\citep{raghu2019rapid}, which is computationally lighter and more suitable for learning linearly parametrized systems (according to observation~\eqref{eq:model_expansion}). We have also adapted the $\ell_1$-CoDA architecture of~\cite{pmlr-v162-kirchmeyer22a} for supervised learning~(originally designed for time series prediction). In all our experiments, the different meta-models share the same underlying neural network architecture, with the last layer of size~$r \gtrsim \mathrm{dim}(\varphi)$. Additional details can be found in~Appendix~\ref{appendix:experiments}.
The linear regressor computed for~CAMEL in~\eqref{eq:linear_identification} is computed after training for all architectures with their trained weights~$w_t$, and is available at test time for identification.

\subsection{Interpretable learning of an electric point charge system}
\label{section:charges}

\begin{figure}
    \centering
    \begin{subfigure}{.48\linewidth}
        \includegraphics[width=\linewidth]{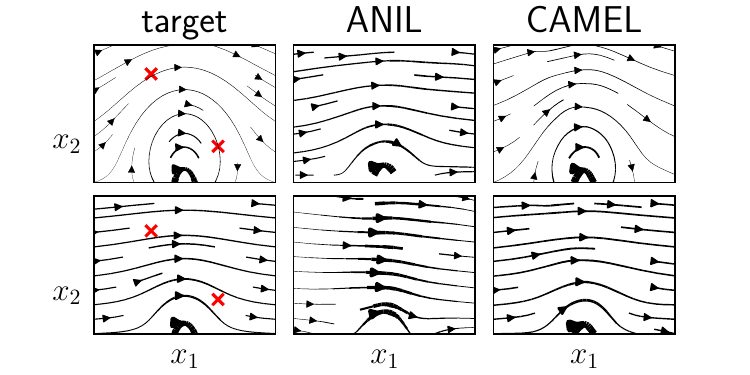}
    \end{subfigure}
    \begin{subfigure}{.48\linewidth}
        \includegraphics[width=\linewidth]{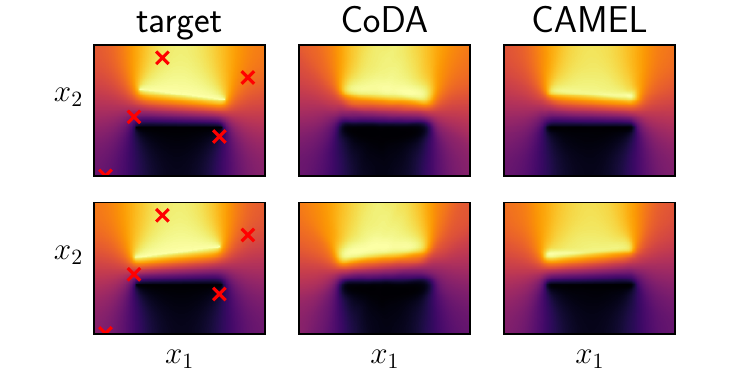}
    \end{subfigure}
    \caption{Few-shot adaptation on two out-of-domain environments of the point charge system in a dipolar setting~\textbf{(left)} and the capacitor~\textbf{(right)}. The adaptation points are represented by the {\color{red} $\times$} symbols. The vector fields are derived from the learned potential fields using automatic differentiation.
    }
    \label{fig:field}

\end{figure}
\begin{wrapfigure}{r}{.25\linewidth}
    \centering
    \vspace{-.65cm}
    \includegraphics[width=\linewidth, trim={.6cm .65cm .5cm .55cm}
        ,clip]{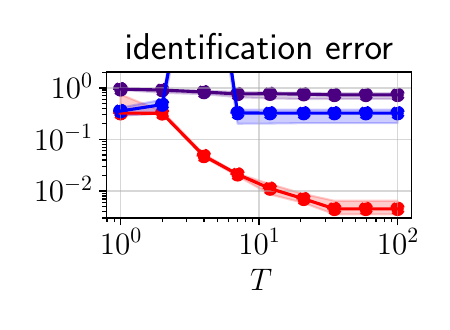}
    \par
    \includegraphics[width=\linewidth, trim={.0cm .03cm .05cm .01cm}
        ,clip]{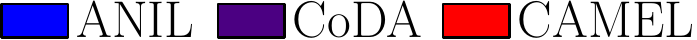}
    \vspace{-.4cm}
    \caption{Average relative error for the point charge identification.}
    \vspace{-.2cm}
    \label{fig:charges_id}
\end{wrapfigure}
As a first illustration of multi-environment learning, we are interested in a data-driven approach to electrostatics, where the experimenter has no knowledge of the theoretical laws (Maxwell's equations, as in~Example~\ref{example:potential}) of the system under study. The electrostatic potential is measured at various points in space, under different experimental conditions. The observations collected are then used to train a meta-learning model to predict the electrostatic field from new experiments, based on very limited data. We start with the toy system described in~Example~\ref{example:charges}, which provides a qualitative illustration of the behavior of various learning algorithms:~$n=3$ point charges placed in the plane at fixed locations. This experiment is repeated with varying charges~$\varphi \in \R^3$.

\paragraph{Results} For this system with linear physical parameters, CAMEL outperforms other baselines and can predict the electrostatic field with few shots, as shown in~Figure~\ref{fig:field} and~Table~\ref{table:adaptation}~(5-shot adaptation). Figure~\ref{fig:charges_id} shows the identification error over~30 random test environments with standard deviations, as a function of the number of training tasks. Thanks to the sample complexity of linear regression,~CAMEL accurately identifies system charges, achieving less than~$1\%$ relative error with~10 training tasks. The resulting zero-shot model adapts to the new environment with great precision. We discuss its applicability to scientific discovery in~Appendix~\ref{appendix:discovery}.



\subsection{Multi-task reinforcement learning and online system identification}
\label{section:control}

Another scientific field in which our theoretical framework can be applied is multi-task reinforcement learning, in which a control policy is learned using data from multiple environments of one system~\citep{electronics9091363}. We saw in~Example~\ref{example:inverse-dynamics} that robot joints obey the inverse dynamics equation, which turns out to be linear in the robot's inertial parameters. Consequently, our architecture lends itself well to the statistical learning of this equation from multiple environment data, as well as to the identification of the dynamic parameters. We may then exploit the learned model of the dynamics to perform adaptive inverse dynamics control~(see~Appendix~\ref{appendix:control}) of robots with unknown parameters, and learn the parameters simultaneously.


\paragraph{Systems}
We experiment with systems of increasing complexity, starting with 2D simulated systems:~cartpole and acrobot. To make them more realistic, we add friction in their dynamics. The analytical equation~\eqref{example:inverse-dynamics} is hence inaccurate, which motivates the use of a data-driven learning method. We then experiment on the simulated 6-degree-of-freedom robot Upkie~(Figure~\ref{fig:upkie}), for which~\eqref{eq:manipulator} is unknown and the wheel torque is learned from the ground position and the joint angles.

\paragraph{Experimental setup}
Learning algorithms are trained on trajectories (a more challenging setting than uniformly spaced data) obtained from multiple system environments. At test time, a new environment is instantiated and the model is adapted from a trajectory of few observations. The resulting adapted model is then used to predict control values for the rest of the trajectory. For the carptole and the robot arm, the predicted values are used to track a reference trajectory using inverse dynamics control. For~Upkie, we could not directly use the predicted controls for actuation, but we compare the open-loop predictions with the executed control law. The target motions are swing-up trajectories for the cartpole and the arm, and a 0.5m displacement for~Upkie. Since~Upkie is a very unstable system, it is controlled in a 200Hz model predictive control loop~\citep{rawlings2000tutorial}.

\vspace{-.1cm}
\paragraph{Online adaptive control} We also investigate a challenging time-varying dynamics setting where the inertial parameters of the system change abruptly at a given time. This scenario is very common in real life and requires the development of control algorithms robust to these changes and fast enough to be adaptive~\citep{aastrom2013adaptive}. In our case, we double the mass of the cart in the cartpole system, and we quadruple the mass of~Upkie's torso. The learning models adapt their task weights online and adjust their control prediction. In an application to parameter identification, we also compute the estimated values of the varying parameter over time.

\begin{wrapfigure}{r}{.125\linewidth}
    {%
        \setlength{\fboxsep}{0pt}%
        \setlength{\fboxrule}{1pt}%
        \fbox{\includegraphics[width=\linewidth, trim={.8cm .8cm .8cm .8cm}, clip]{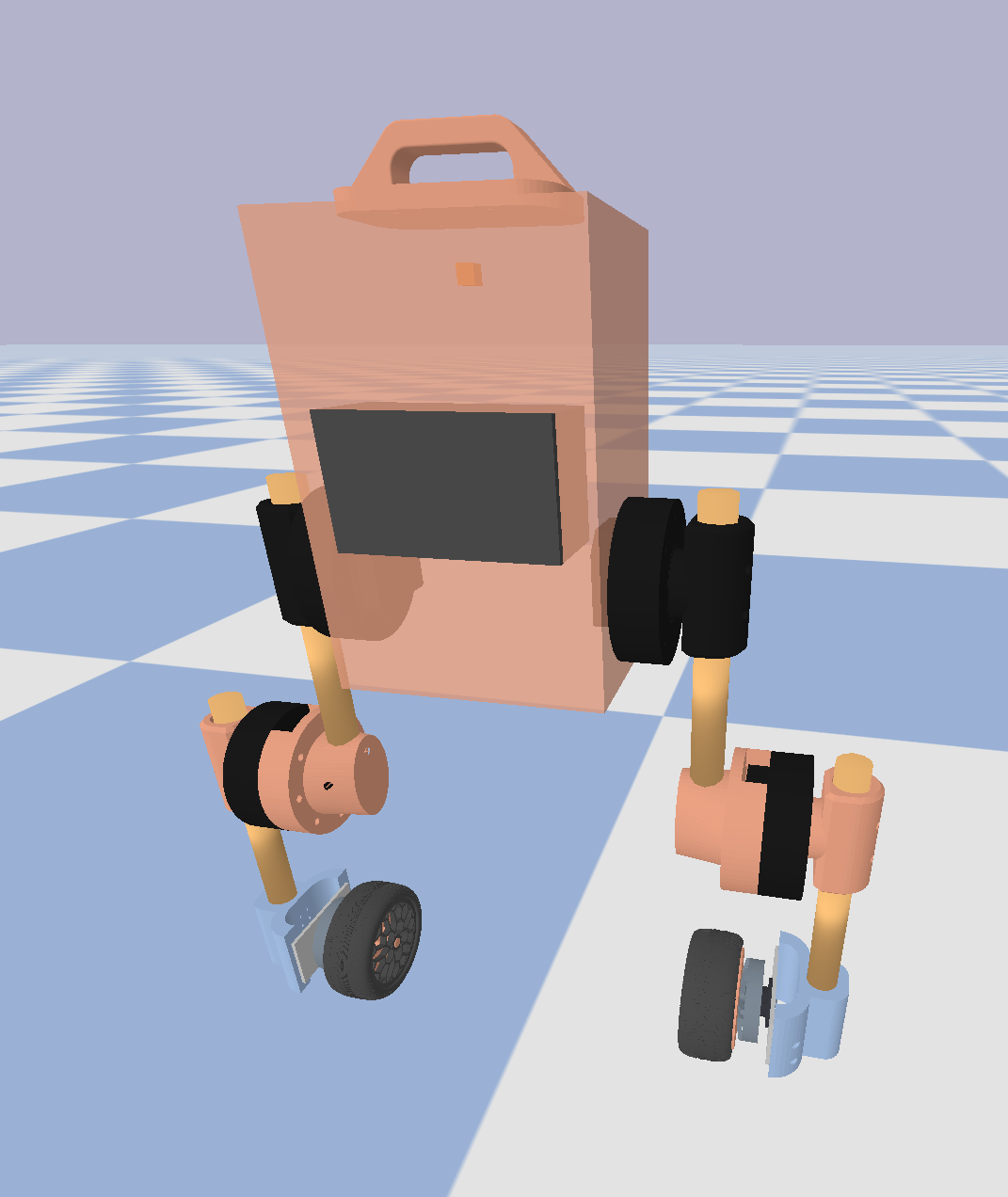}}
        \caption{\\ Upkie.}
        \vspace{-.3cm}
        \label{fig:upkie}
    }%
\end{wrapfigure}
\paragraph{Results} The 100-shot adaptation error of the control values is reported in~Table~\ref{table:adaptation}. The trajectories obtained with inverse dynamics control adapted from 50 shots are plotted in~Figure~\ref{fig:control} for~CAMEL and for the best-performing baseline,~ANIL, along with the analytical solution. Only~CAMEL adapts well enough to track the target trajectory. The analytic solution underestimates the control as it does not account for friction, resulting in inaccurate tracking. In the adaptive control setting, the variation in the mass of the cart leads to a deviation from the target trajectory but~CAMEL is able to adapt quickly to the new environment and identifies the new mass, unlike ANIL. Experimentation on~Upkie shows that the computational time of adaptation can be crucial, as we found that the gradient-based adaptation of~ANIL and~CoDA was too slow to run in the 200Hz model predictive control loop. On the other hand, CAMEL's gradient-free adaptation and interpretability allow it to track and identify changes in system dynamics, and to correctly predict the stabilizing control law.

\begin{figure}
    \begin{minipage}{.26\linewidth}
        \vspace{.3cm}
        \begin{subfigure}{\textwidth}
            \centering
            \includegraphics[width=\linewidth, trim={.5cm .5cm .0cm .5cm}
                ,clip]{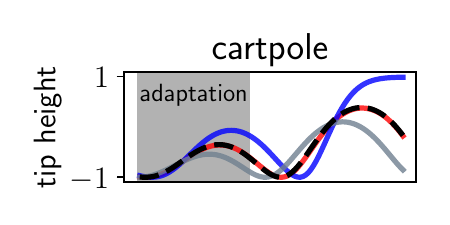}
        \end{subfigure}
        \par
        \begin{subfigure}{\textwidth}
            \centering
            \includegraphics[width=\linewidth, trim={.5cm .2cm .0cm .5cm}
                ,clip]{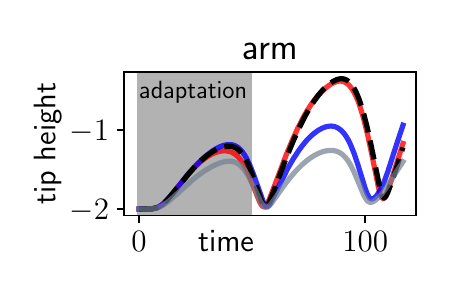}
        \end{subfigure}
    \end{minipage}
    \begin{minipage}{.275\linewidth}
        \centering
        \includegraphics[width=\linewidth]{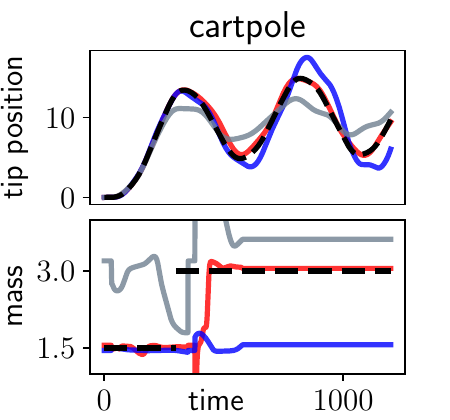}
    \end{minipage}
    \begin{minipage}{.35\linewidth}
        \centering
        \includegraphics[width=\linewidth]{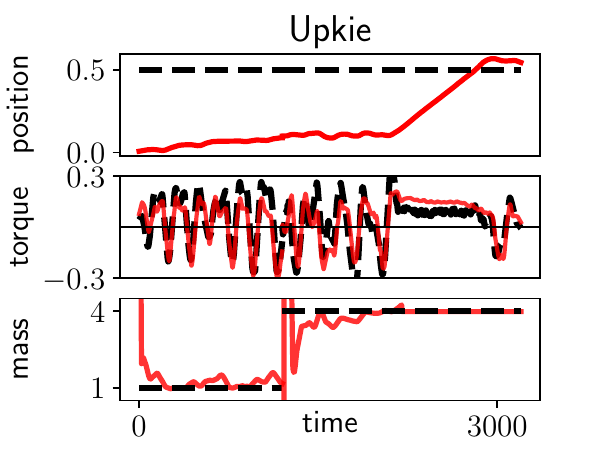}
    \end{minipage}
    \begin{minipage}{.06\linewidth}
        \centering
        \includegraphics[width=\linewidth]{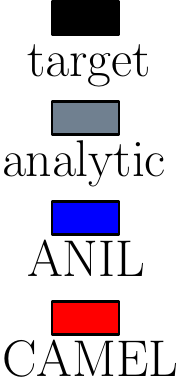}
    \end{minipage}
    \caption{Tracking of a reference trajectory using the learned inverse dynamics controller. \\ \textbf{Left.} 50-shot adaptation. \textbf{Center and right.} The model and the controller are adapted online.}
    \label{fig:control}
\end{figure}
\todo[inline]{parameter estimation with the analytic baseline}

\subsection{Beyond context-linear systems}
\label{section:capacitor}

In order to evaluate our method on general systems with no known parametric structure, we consider the following non-analytical electrostatic problem of the form shown in~Example~\ref{example:potential}. The field is created by a capacitor formed by two electrodes that are not exactly parallel. The variability of the different experiments stems from the misalignment~$\delta \varphi\in \R^2$, in angle and position, of the upper electrode. We apply the same methodology as described in~Section~\ref{section:charges}. The whole multi-environment learning experiment is repeated several times with varying magnitudes of misalignment, by replacing~$\delta \varphi$ with~$\varepsilon \,\delta \varphi$ for different values of~$\varepsilon \in [0,1]$. This parameterization allows us to move gradually from local perturbations when~${\varepsilon \ll 1}$ (as in~Example~\ref{example:local-boundary}) to arbitrary variations in the environment.

\begin{wrapfigure}{r}{.48\linewidth}
    \centering
    \includegraphics[width=\linewidth, trim={.3cm .3cm .9cm 1.05cm}
        ,clip]{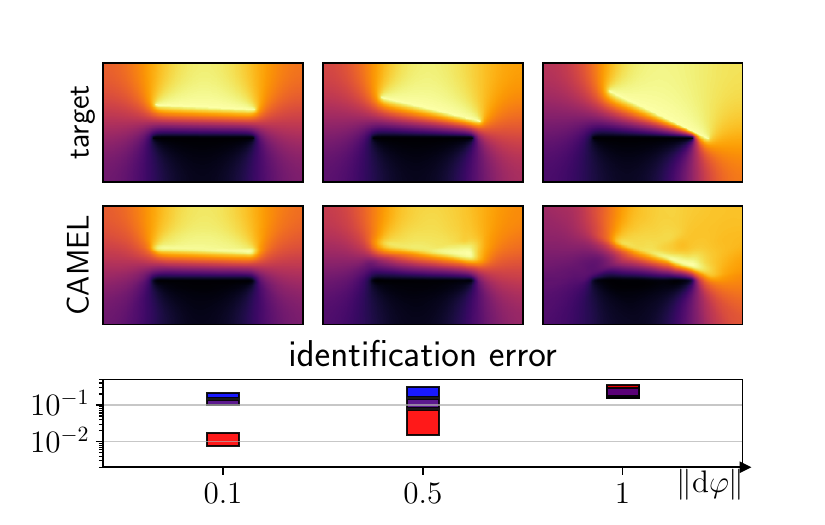}
    \par
    \includegraphics[width=.55\linewidth, trim={.0cm .03cm .05cm .01cm}
        ,clip]{legend_horizontal.pdf}
    \caption{Adaptation and relative identification error for the~$\varepsilon$-capacitor, with increasing~$\varepsilon$.}
    \label{fig:perturbation}
    \vspace{-.4cm}
\end{wrapfigure}
\paragraph{Results} The 40-shot adaptation error for the $\varepsilon$-capacitor is reported in~Table~\ref{table:adaptation}, with perturbation of full magnitude~$\varepsilon=1$ and with~$\varepsilon=0.1$. We also show the 5-shot adaptation of~CAMEL and the best performing baseline,~CoDA, for~$\varepsilon = 0.2$ in~Figure~\ref{fig:field}. When the system parameters are fully nonlinear,~CAMEL and the baselines perform similarly,~but CAMEL is much faster. In the second case,~CAMEL outperforms them by an order of magnitude and accurately predicts the electrostatic field, whereas~CoDA's exhibits lower precision. Predictions and average identification error (with standard deviations) are plotted as a function of~$\varepsilon$ in~Figure~\ref{fig:perturbation}. For small~$\varepsilon$, the system parameter perturbation is well identified, enabling a zero-shot adaptation. Remarkably,~Figure~\ref{fig:field} suggests that the zero-shot model~{$\varphi$-CAMEL} performs as well as its few-shot counterpart in this regime, demonstrating the effectiveness of interpretability.



\captionof{table}{
    Average adaptation mean squared error {(\textbf{left})} and computational time {(\textbf{right})}.
}
\begin{minipage}{.75\linewidth}
    \centering
    \resizebox{\textwidth}{!}{
        \begin{tabular}{c|c|c|c|c|c|c|}
            {System}  & {Charges}               &
            Capacitor & $\varepsilon$-Capacitor
                      & {Cartpole}              & {Arm}           & Upkie
            \\
            \hline
            \hline
            MAML      & \textsc{1.6e-1}         & N/A             & N/A             & \textsc{1.8e0}  & \textsc{8.1e-1} & \textsc{1.5e-2}
            \\
            \hline
            ANIL      & \textsc{9.2e-4}         & \textsc{3.6e-2} & \textsc{1.1e-3} & \textsc{2.5e-2} & \textsc{7.5e-1} & \textsc{1.9e-2}
            \\
            \hline
            CoDA      & \textsc{8.2e-2}         & \textsc{2.6e-2} & \textsc{1.0e-3} & \textsc{8.1e-1} & \textsc{9.3e-1} & \textsc{2.1e-2}
            \\
            \hline
            {
            R2-D2 }     & \textsc{1.2e-4}         & \textsc{3.1e-4} & \textsc{4.2e-4} & \textsc{8.5e-3} & \textsc{3.5e-1} & \textsc{2.3e-2}
            \\
            \hline
            CAMEL     & \textsc{1.0e-4}         & \textsc{2.6e-2} & \textsc{1.9e-4} & \textsc{3.1e-3} & \textsc{2.4e-1} & \textsc{8.2e-3}
        \end{tabular}
    }
    \label{table:adaptation}
\end{minipage}
\begin{minipage}{.24\linewidth}
    \centering
    \resizebox{\textwidth}{!}{
        \begin{tabular}{|c|c|}
            {Training} & {Adaptation}
            \\
            \hline
            \hline
            30         & 10
            \\
            \hline
            10         & 3
            \\
            \hline
            2          & 8
            \\
            \hline
            20          & 1
            \\
            \hline
            1          & 1
            \\
        \end{tabular}
    }
    \label{table:compute}
\end{minipage}

\newpage
\section{Related work}
\paragraph{{{Multi-task}} meta-learning}
Meta-learning algorithms for multi-task generalization have gained popularity~\citep{hospedales2021meta}, with the MAML algorithm of~\cite{finn2017model} playing a fundamental role in this area. Based on the same principle, the variants~ANIL~\citep{raghu2019rapid} and~CAVIA~\citep{zintgraf2019fast} have been proposed to mitigate training costs and reduce overfitting. Interpretability is addressed in the latter work, using a large number of training tasks.
    {
        
        In a different line of work,~\cite{bertinetto2019meta} proposed the~R2-D2 architecture where the heads of the network are adapted using the closed-form formula of~Ridge regression.
        The similarities between multi-task representation learning and gradient-based learning are studied in~\citep{wang2021bridging} from a theoretical point of view, in the limit of a large number of tasks.
        Unlike our method, the approaches above rely on the assumption that the number of training tasks is large (in few-shot image classification for example, where it can be in the millions~\citep{wang2021bridging, hospedales2021meta}) and the number of data points per task is limited. For physical systems, in contrast, since experimenting is often costly, the number of tasks available at training is typically very limited, but the number of points for each task can be large. The assumption of limited allows the task-specific weightsto be stored in the meta-parameter vector instead of being computed at each training step. 
    }

\paragraph{Meta-learning physical systems}
Meta-learning has been applied to multi-environment data for physical systems, with a focus on  dynamical systems, where the target function is the flow of a differential equation. Recent algorithms include~LEADS~\citep{yin2021leads}, in which the task dependence is additive in the output space and CoDA~\citep{pmlr-v162-kirchmeyer22a}, where parameter identification is addressed briefly, but under strong assumptions of input linearity.
\cite{wang2022meta} propose physical-context-based learning, but context supervision is required for training. From a broader point of view, the interpretability of the statistical model can be imposed by adding physical constraints to the loss function~\citep{raissi2019physics}.

 {
 \paragraph{Multi-task reinforcement learning} 
 Meta-learning has given rise to a number of fruitful new approaches in the field of reinforcement learning.
  \cite{sodhani2021multi} and~\cite{clavera2018learning} propose multi-task deep learning algorithms, but no structure is assumed on the dynamics and the learned weights can be interpreted only statistically, in the parameter space of a large black-box neural network. 
   Multi-task learning of inverse dynamics with varying inertial parameters is studied in~\citep{williams2008} using~Gaussian processes, but parameter identification is not addressed.
 }





\section{Conclusion}
\todo[inline]{Interpretable for linear to approximately linear contexts }
\todo[inline]{NTK? }
\todo[inline]{Model-agnostic so generalizes to convolutional neural networks}

We introduced~CAMEL,
{a simple multi-task learning algorithm designed for} multi-environment learning of physical systems. For general and complex physical systems, we demonstrated that our method performs as well as the state-of-the-art, at a much lower computational cost. Moreover, when the learned system exhibits a linear structure in its physical parameters, our architecture is particularly effective, and enables the identification of these parameters with little supervision, independently of training.
The identifiability conditions found in~Proposition~\ref{proposition:identification} are not very restrictive, and the effectiveness of the linear identification map is demonstrated in our experiments.

We proposed a particular application in the field of robotics where our data-driven method enables concurrent adaptive control and system identification. We believe that enforcing more physical structure in the meta-model, using for example~Lagrangian neural networks~\citep{lutter2019deep}, can improve its sample efficiency and extend its applicability to more complex robots.

While we focused on classical regression tasks, our framework can be generalized to predict dynamical systems by combining it with a differentiable solver~\citep{chen2018neural}.
Another interesting avenue for future research is the use of active learning, to make the most at out the available training resource and enhance the efficiency of multi-task learning for static and dynamic systems~\citep{pmlr-v202-wang23b, blanke2023flex}.





\clearpage

\section*{Acknowledgements}
This work was partially supported by the French government
under management of Agence Nationale de la Recherche as
part of the “Investissements d’avenir” program, reference
~ANR19-P3IA-0001 (PRAIRIE 3IA Institute).

\bibliography{references}
\bibliographystyle{iclr2024_conference}
\clearpage
\appendix
\section{Proofs}
\label{appendix:proof}

\begin{lemma}
    \label{lemma:symmetry}
    Let~$v_1,\dots, v_N$, and~$w_1,\dots, w_T \in\R^r$, and let~$r'\leq r$ and~$v'_1, \dots, v'_N$, and~${w'_1,\dots, w'_T \in \R^{r'}}$ be two sets of vector of full rank, satisfying~$\forall i, t, \transp{w_t}v_i = \transp{w'}_t v'_i$. Then there exist~${P, Q\in\R^{r'\times r}}$ such that~$w_t' = {P}  w_t$ and~$v'_i = Q v_i$. Furthermore,~${Q}\transp{P} = I_{r'}$.
\end{lemma}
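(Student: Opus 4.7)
The plan is to reformulate the pointwise identity as a single matrix equation, and then exhibit $P$ and $Q$ explicitly via pseudoinverses of the primed matrices.

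First, I would stack the vectors into matrices: $W\in\R^{T\times r}$ with rows $\transp{w_t}$, and likewise $V\in\R^{N\times r}$, $W'\in\R^{T\times r'}$, $V'\in\R^{N\times r'}$. The hypothesis $\transp{w_t}v_i = \transp{w'_t}v'_i$ for all $i,t$ becomes the single matrix identity $W\transp{V} = W'\transp{V'}$, whose transpose reads $V\transp{W} = V'\transp{W'}$. The full-rank hypothesis on $\{w'_t\}$ and $\{v'_i\}$ says that $W'$ and $V'$ have full column rank $r'$, so the Gram matrices $\transp{W'}W'$ and $\transp{V'}V'$ are invertible $r'\times r'$ matrices.

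Next, I would propose the explicit candidates
\[
Q := (\transp{W'}W')^{-1}\transp{W'}W,
\qquad
P := (\transp{V'}V')^{-1}\transp{V'}V,
\]
both in $\R^{r'\times r}$; these are the left Moore--Penrose pseudoinverses of $W'$ and $V'$ applied to $W$ and $V$ respectively. Each of the three claims then reduces to a single substitution using the matrix identity above. For $W'=W\transp{P}$, I compute $W\transp{P} = W\transp{V}V'(\transp{V'}V')^{-1} = W'\transp{V'}V'(\transp{V'}V')^{-1} = W'$; the verification $V'=V\transp{Q}$ is symmetric, using the transposed identity $V\transp{W}=V'\transp{W'}$. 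Finally, $Q\transp{P}$ telescopes by inserting $W\transp{V}=W'\transp{V'}$ in the middle:
\[
Q\transp{P}
= (\transp{W'}W')^{-1}\transp{W'}(W\transp{V})V'(\transp{V'}V')^{-1}
= (\transp{W'}W')^{-1}(\transp{W'}W')(\transp{V'}V')(\transp{V'}V')^{-1}
= I_{r'}.
\]

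The entire proof is essentially the construction of $P$ and $Q$; once the pseudoinverse formulas are written down, each claim is a one-line manipulation and I do not expect any genuine obstacle. The only subtlety worth flagging is that the argument never uses any full-rank property of the unprimed vectors, which is exactly what makes the lemma applicable to Proposition~\ref{proposition:identification}, where the learned weights $\omega_t$ and learned features $v(x^{(i)})$ are not \emph{a priori} assumed to span $\R^r$.
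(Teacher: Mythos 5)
Your proposal is correct and follows essentially the same route as the paper's proof: stack the vectors into matrices, rewrite the hypothesis as $V\transp{W}=V'\transp{W'}$, and obtain $P$ and $Q$ as the left Moore--Penrose pseudoinverses of $V'$ and $W'$ applied to $V$ and $W$ (the paper derives these by multiplying the matrix identity by $V'^+$ and $W'^+$, you define them and verify---the same computation read in the other direction). Your closing observation that only the primed families need full column rank is also exactly the reading the paper relies on.
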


\begin{proof}[Proof of Lemma~\ref{lemma:symmetry}]
    Denoting by~$V\in \R^{N \times r}$,~$V'\in \R^{N \times r'}$,~$W \in \R^{T \times r}$ and~$W' \in \R^{T \times r'}$ the matrix representations of the vectors, the scalar equalities~$\forall i, t, \transp{w_t}v_i = \transp{w'}_t v'_i$ take the matrix form
    \begin{equation}
        \label{eq:matrix_equality}
        V \transp{W} = V' \transp{W'}.
    \end{equation}
    Since~$V'$ is of full rank, the matrix~$V'^+ := (V' \transp{V'})^{-1}\transp{V'} \in \R^{r \times N}$ is well defined and is a left inverse of~$V'$. Multiplying~\eqref{eq:matrix_equality} by~$V'^+$ yields
    \begin{equation}
        W' = W \transp{P} \quad \text{with} \quad {P} := {{V'^+}V} \; \in \R^{r' \times r}.
    \end{equation}
    Similarly,
    \begin{equation}
        V' = V \transp{Q} \quad \text{with} \quad {Q} := {{W'^+}W} \; \in \R^{r' \times r}.
    \end{equation}

    Now compute~${Q}\transp{P} = {W'^+}W \transp{P} = {W'^+}W' = I_{r'}$

\end{proof}

\begin{proof}[Proof of Proposition~\ref{proposition:identification}]
    Applying Lemma~\ref{lemma:symmetry} to~$v'_i := \nu(x^{(i)})$,~$v_i := v(x^{(i)})$, and~$w_t:= \omega_t$,~$w'_t:=\varphi_t$ yields the stated result.
\end{proof}
{
The case  where~$c, \kappa \neq 0$ can be handled as follows. We augment~$\varphi$ and $\nu$, and $\omega$ and $v$ with an additional dimension, with the last components of~$\varphi$ and $\omega$ equal to $1$ and the last components of~$\nu$ and~$v$ equal to~$\kappa$ and $c$ respectively. The augmented vectors satisfy the assumptions of~Proposition~\ref{proposition:identification} provided the augmented~$v'_i$ and~$w'_t$ span~$\R^{n+1}$. The proposition then applies, and implies that the physical parameters~$\varphi_t$ can be recovered with an affine transform. This case is tackled experimentally in the capacitor experiment~(Section~\ref{section:capacitor}), where~$\kappa\neq 0$ \textit{a fortiori} since the electrostatic field is linearized around a nonzero value. The physical parameters are identified using an affine regression.
}

\section{Experimental details}
\label{appendix:experiments}

{

\subsection{Architectures}

All neural networks are trained with the ADAM optimizer~\cite{2015-kingma}. For CoDA, we set~$d_\xi =r$, chosen according to the system learned. For all the baselines, the adaptation minimization problem~\eqref{eq:adaptation_problem} is optimized with at least 10 gradient steps, until convergence.

For training, the number of inner gradient steps of MAML and ANIL is chosen to be 1, to reduce the computational time. We have also experimented with larger numbers of inner gradient steps. This improved the stability of training, but at the cost of greater training time.}

\subsection{Systems}

We provide further details about the physical systems on which the experiments of~Section~\ref{section:experiments} are performed.

\subsubsection{Point charges}

The~$n$ charges are placed at fixed locations in the plane at fixed location.
The training inputs are located in~$\Omega=[-1, 1]\times[0, 1]$ which is discretized into a~$20\times20$ grid and the ground truth potential field is computed using~Coulomb's law.

The training data is generated by changing each charge's value in~$\{1,\dots, 5\}^n$, hence~$T=5^n$.
We have experimented on different settings with various numbers of charges, and various locations. In~Section~\ref{section:charges}, a dipolar configuration is investigated, where~$n=3$, and one of the charges is far away on the left and two other charges of opposite sign are located near~$x_2=0$. Gaussian noise of size~$\sigma=0.1$ is added to the field values revealed to the learner in the test dataset.

The system is learned with a neural network of 4 hidden layers of width 16, with the last layer of size~$r=n$.

For evaluation, the test data is generated with random charges drawn from a uniform distribution in~$[1,\dots, 5]^n$ and the data points are drawn uniformly in~$\Omega$

\subsubsection{Capacitor}

The space is discretized into a $200\times300$ grid. The training environments are generated with 10 values of the physical context~$\varphi:=(\alpha, \eta)\in [0, 0.5]\times[-0.5, 0.5]$ containing the angular and the positional perturbation of the second plate, drawn uniformly.
The ground truth electrostatic field is computed with the Poisson equation solver of~\cite{electronics11152365}.
For evaluation, 5 new environments are drawn with the same distribution.

The system is learned with a neural network of 4 hidden layers of width 64, with the last layer of size~$r=n+1 = 3$.

\subsubsection{Cartpole and arm}

We have implemented the manipulator equations for the cartpole and the arm (or acrobot), following~\cite{underactuated}, and have added friction. The training data is generated by actuating the robots with sinusoidal inputs, with for each environment 8 trajectories of 200 points and random initial conditions and periods. At test time, the trajectories are generated with sinusoidal inputs for evalutation, and with swing-up inputs for trajectory tracking.

\paragraph{Cartpole} The pole's length is set to 1, the varying physical parameters are the masses of the cart and of the pole: $\varphi_t \in \{1, 2\}\times \{0.2, 0.5\}$, so~$T=4$. For evalutation, the masses are drawn uniformly around~$(2, 0.3)$, with an amplitude of~$(1, 0.2)$.
The system is learned with a neural network of 3 hidden layers of width 16, with the last layer of size~$r=n+2 = 4$.

\paragraph{Arm} The arm's length are set to 1, the varying physical parameters are the inertia and the mass of the second arm: $\varphi_t \in \{0.25, 0.3, 0.4\}\times \{0.9, 1.0, 1.3\}$, so $T=9$. For evalutation, the inertial parameters are drawn uniformly around~$(0.5, 1)$, with an amplitude of~$(0.2, 0.3)$.
The system is learned with a neural network of 4 hidden layers of width 64, with the last layer of size~$r=n+2 = 4$.

\subsubsection{Upkie}

Information about the open-source robot Upkie can be found at~\url{https://github.com/tasts-robots/upkie}.

We trained the meta-learning algorithm on balancing trajectories of 1000 observations, with 10 different values for Upkie's torso, ranging from 0.5 to 10 kilograms. For evaluation, the mass is sampled in the same interval.

The system is learned with a neural network of 4 hidden layers of width 64, with the last layer of size~$r=n+2 = 3$.

\subsection{Inverse dynamics control}
\label{appendix:inverse-dynamics-control}

Inverse dynamics control is a nonlinear control technique that aims at computing the control inputs of a system given a target trajectory~$\{ \bar{q}(s)\}$~\cite{spong2020robot}. Using a model $\hat{\mathrm{ID}}$ for the inverse dynamics equation~\eqref{example:inverse-dynamics}, the feedforward predicted control signal~$\hat{u} = \hat{\mathrm{ID}}(\bar{q}, \dot{\bar{q}}, \ddot{\bar{q}})$. These feedforward control values can then be combined with a low gain feedback controller to ensure stability, as
\begin{equation}
    u = \hat{u} + K(\bar{q} - q) + K'(\dot{\bar{q}} - \dot{q}).
\end{equation}

For the cartpole, we used~$K= K' =0.5$. For the robot arm, we used~$K=K'=1$.

\subsection{Adaptive control}
\label{appendix:control}

In a time-varying dynamics scenario, CAMEL can be used for adaptive control and system identification. Given a target trajectory, the task-agnostic component~$v$ of the model predictions can be computed offline. In the control loop, the task-specific component~$\omega$ is updated with the online least squares formula. The control loop is summarized in~Algorithm~\ref{algorithm:online_control}, where we have assumed~$c=0$ for simplicity. The estimated inertial parameters are deduced from the task-specific weights with the identification matrix~\eqref{eq:linear_identification}.

\begin{center}
    \begin{algorithm}[H]
        \caption{Adaptive trajectory tracking}
        \label{algorithm:online_control}

        \begin{algorithmic}
            \State \textbf{input}
            trained feature map~$v(x)$, target trajectory~${s \mapsto \bar{q}_s}$
            \State \textbf{Offline control}
            \For{timestep $0 \leq s \leq H-1$}
            \State  compute $\bar{x}_s  = (\bar{q}_s, \dot{\bar{q}}_s, \ddot{\bar{q}}_s) $
            \State compute features $\bar{v}_s := v(\bar{x}_s)$
            \EndFor
            \State \textbf{Control loop}
            \State Initialize $M_0 = I_r$, \quad $\omega_0 = (0, \dots, 0)$
            \For{time step $1 \leq s \leq H$}
            \State  compute $\hat{u}_s = \transp{\omega}_s \bar{v}_s$
            \State  compute $e_s = q_s - \bar{q}_s$
            \State  play $u_s := \hat{u}_s + K e_s$
            \State  observe~$q_{s+1}$, $\dot{q}_{s+1}$
            \State  compute~$v_s := v(x_s)$
            \State  update $M_{s+1} = M_{s} -\frac{M_{s}v_s\transp{(M_s v_s)}} {1+ \transp{v}_s M_s v_s}$
            \State  update $\omega_{s+1} = \omega_s - (\transp{v_{s}}\omega_s - u_s)M_{s+1}v_{s}$
            \EndFor
        \end{algorithmic}
    \end{algorithm}
\end{center}

\clearpage

\subsection{Additional numerical results}
We provide details concerning~Table~\ref{table:adaptation}.

{
\paragraph{Computational time} For the computational times of~Table~\ref{table:adaptation}, we arbitrarily chose the shortest time as the time unit, for a clearer comparison among the baselines. The computational times were measured and averaged over each experiment, with equal numbers of batch sizes and gradient steps across the different architectures. For training, the time was divided by the number of gradient steps.
}


\begin{table}[H]
    \centering
    \caption{Adaptation performances with standard deviations.}
    \begin{tabular}{c|c|c|c|c|}
        \multirow{2}{*}{System} & \multicolumn{2}{c|}{Charges{,  30 trials}} & \multicolumn{2}{c|}{Capacitor{,  5 trials}}
        \\
                                & 3-shot                                                 & 10-shot                                                 & 5-shot                     & 40-shot
        \\
        \hline
        \hline
        MAML                    & \textsc{4.1e-0 $\pm$ 2e-0}                             & \textsc{1.6e-1 $\pm$ 5e-2}                              & N/A                        & N/A
        \\
        \hline
        ANIL                    & \textsc{3.5e0 $\pm$ 5e-1}                              & \textsc{9.2e-4 $\pm$ 5e-4}                              & \textsc{4.4e-2 $\pm$ 2e-2} & \textsc{3.6e-2$\pm$ 1e-2}
        \\
        \hline
        CoDA                    & \textsc{1.0e-1 $\pm$ 9e-2}                             & \textsc{8.2e-2 $\pm$ 3e-2}                              & \textsc{4.7e-2 $\pm$ 5e-5} & \textsc{2.6e-2$\pm$ 1e-2}
        \\
        \hline
        CAMEL                   & \textsc{2.0e-4 $\pm$ 1e-4}                             & \textsc{1.0e-4 $\pm$ 5e-5}                              & \textsc{3.6e-2 $\pm$ 2e-2} & \textsc{2.6e-2 $\pm$ 1e-2}
        \\
        \hline
        \hline
        $\varphi$-CAMEL         & \multicolumn{2}{c|}{\textsc{3.0e-3} }                  & \multicolumn{2}{c|}{\textsc{6.5e-2}}
        \\
        \hline
    \end{tabular}
    \label{table:time}
\end{table}
\begin{table}[H]
    \centering
    \begin{tabular}{c|c|c|}
        \multirow{2}{*}{System} & \multicolumn{2}{c|}{$\varepsilon$-Capacitor, $\varepsilon=0.1${, 5 trials}}
        \\
                                & 3-shot                                                                                  & 30-shot
        \\
        \hline
        \hline
        MAML                    & N/A                                                                                     & N/A
        \\
        \hline
        ANIL                    & \textsc{1.1e-3 $\pm$ 5e-5}                                                              & \textsc{1.1e-3 $\pm$ 5e-5}
        \\
        \hline
        CoDA                    & \textsc{1.2e-3 $\pm$ 5e-4}                                                              & \textsc{1.0e-3 $\pm$ 5e-4}
        \\
        \hline
        CAMEL                   & \textsc{4.2e-4 $\pm$ 1e-4}                                                              & \textsc{1.9e-4 $\pm$ 2e-5}
        \\
        \hline
        \hline
        $\varphi$-CAMEL         & \multicolumn{2}{c|}{\textsc{1.9e-4}}
        \\
        \hline
    \end{tabular}
\end{table}
\begin{table}[H]
    \centering
    \begin{tabular}{c|c|c|c|c|}
        \multirow{2}{*}{System} & \multicolumn{2}{c|}{Cartpole{, 50 trials}} & \multicolumn{2}{c|}{Arm{, 50 trials}}
        \\
                                & 50-shot                                                & 100-shot                                          & 50-shot                    & 100-shot
        \\
        \hline
        \hline
        MAML                    & \textsc{4.3e0 $\pm$ 7e-1}                              & \textsc{3.5e0 $\pm$ 6e-1}                         & \textsc{1.0e0 $\pm$ 1e-1}  & \textsc{8.1e-1 $\pm$ 5e-2}
        \\
        \hline
        ANIL                    & \textsc{3.8e-1 $\pm$ 1e-1}                             & \textsc{2.5e-2 $\pm$ 9e-2}                        & \textsc{8.5e-1 $\pm$ 1e-1} & \textsc{7.5e-1 $\pm$ 4e-2}
        \\
        \hline
        CoDA                    & \textsc{3.8e-1 $\pm$ 9e-3}                             & \textsc{8.1e-1 $\pm$ 1e-1}                        & \textsc{9.5e-1 $\pm$ 9e-2} & \textsc{9.3e-1 $\pm$ 6e-2}
        \\
        \hline
        CAMEL                   & \textsc{4.8e-2 $\pm$ 1e-2}                             & \textsc{3.1e-3 $\pm$ 5e-4}                        & \textsc{3.1e-1 $\pm$ 5e-2} & \textsc{2.4e-1 $\pm$ 1e-2}
        \\
        \hline
    \end{tabular}
\end{table}
\begin{table}[H]
    \centering
    \begin{tabular}{c|c|c|}
        {System} &
        {Upkie{, 15 trials}}
        \\
        \hline
        \hline
        MAML     & \textsc{1.5e-2 $\pm$ 7e-3}
        \\
        \hline
        ANIL     & \textsc{1.9e-2 $\pm$ 6e-3}
        \\
        \hline
        CoDA     & \textsc{2.1e-2 $\pm$ 3e-3}
        \\
        \hline
        CAMEL    & \textsc{8.2e-3 $\pm$ 5e-3}
        \\
        \hline
    \end{tabular}
\end{table}

\clearpage

\begin{figure}[H]
    \centering
    \includegraphics[width=.8\linewidth]{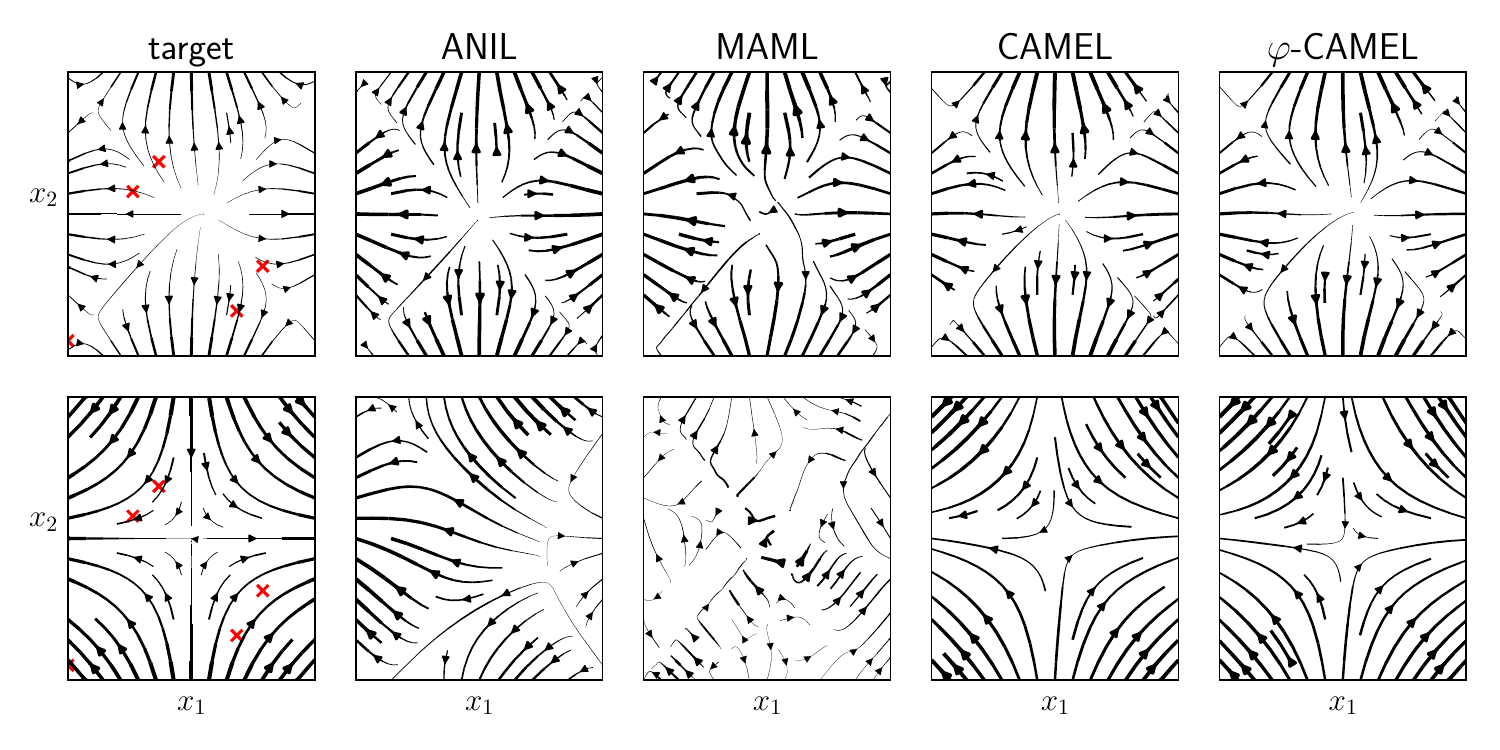}
    \caption{5-shot adaptation for the 4 point charge system. \textbf{Top.} The four charges are positive, as in the training meta-dataset. \textbf{Bottom} Two of the four charges are negative.}
    \label{fig:quadrupole}
\end{figure}
\begin{figure}[H]
    \centering
    \includegraphics[width=.8\linewidth]{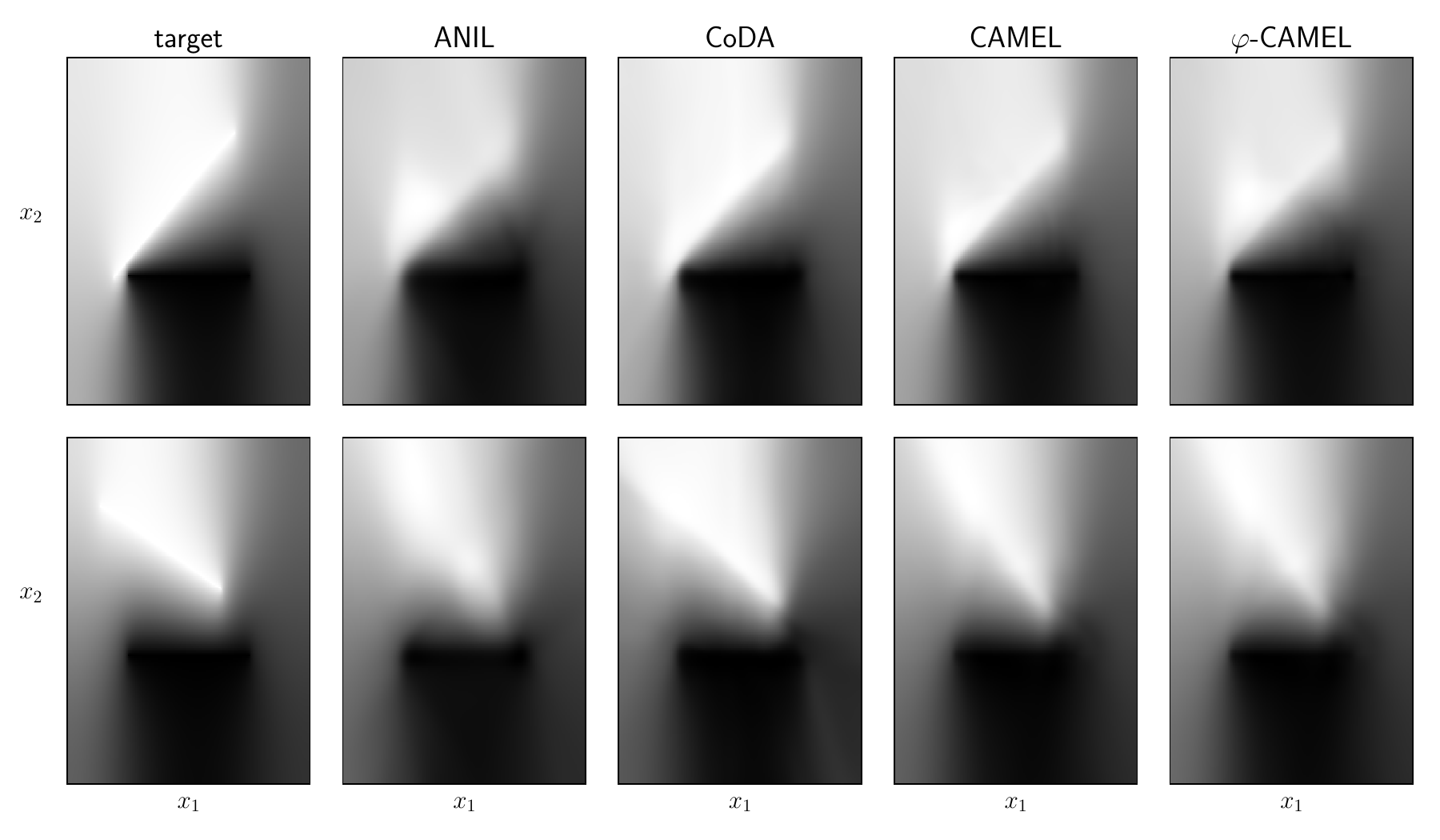}
    \caption{Capacitor, 40-shot adaptation. }
    \label{fig:capacitor_full}
\end{figure}
\begin{figure}[H]
    \centering
    \includegraphics[width=.5\linewidth]{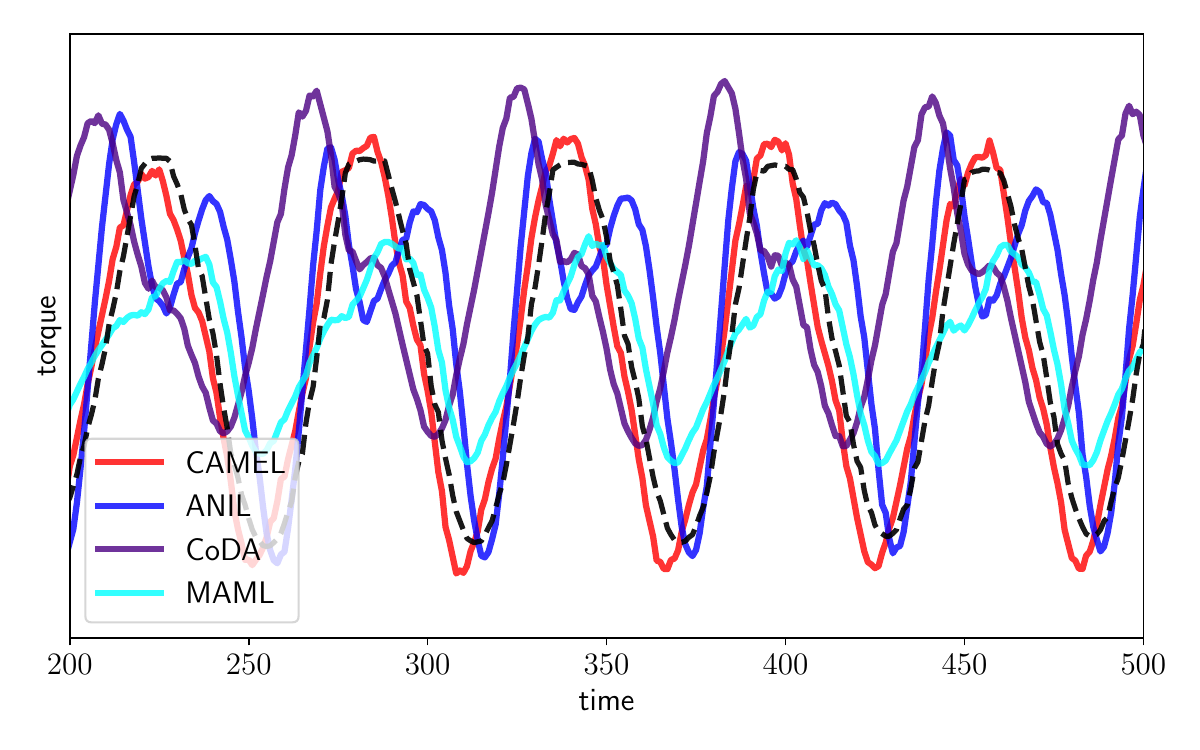}
    \caption{Upkie torque prediction, 100-shot adaptation. }
    \label{fig:benchmark_upkie}
\end{figure}

\subsection{Zero-shot adaptation and scientific discovery}
\label{appendix:discovery}

In a data-driven approach, training CAMEL offers not only the ability to adapt to a small number of observations, but also to predict the system without any data for arbitrary values of the its parameters.  We believe that the 0-shot adaptation algorithm~$\varphi$-CAMEL that we introduced in~Section~\ref{section:identification} can be used in the process of scientific discovery. In many cases, the experimenter has the knowledge of~(or knows an estimate of) the physical quantities varying across experimental conditions, while not knowing accurately the system itself. Then, $\varphi$-CAMEL can be used to infer the target function for chosen values of the physical parameters~$\varphi$ independently of the values observed for training.

Of course, the predictions of~$\varphi$-CAMEL are good only if the estimator~$\hat{\varphi}$ of~\eqref{eq:linear_identification} is good, implying a sufficient number of training tasks and an effective training of CAMEL. For nonlinear physical contexts, the values of~$\varphi$ that are investigated should be close to the reference value~$\varphi_0$ so that~\eqref{eq:locally-linear} holds.

We further illustrate on the toy example of~$n=4$ point charges, for which the experimenter could observe experiments with positive charges.~Figure~\ref{fig:quadrupole} shows the predictions after 5-shot adaptation of the different meta-models, along with the zero-shot adaptation of~$\varphi$-CAMEL. We can see that only CAMEL and $\varphi$-CAMEL adapt well to negative charges. In particular, the zero-shot adaptation of $\varphi$-CAMEL enables estimating the system in an experiment whose numerical values are completely different from the training dataset, thanks to the structure of the model and of the equations in this case (since they are known to be linear in the charges). Importantly, evaluating $\varphi$-CAMEL for different values of~$\varphi$ is not costly, since the identification map is already computed using the training data.

We could imagine that this scenario might enable discovering new properties of complex physical systems as by exploring the space of physical parameters, in a data-driven fashion. Regarding the simple example of~Figure~\ref{fig:quadrupole}, knowing the form of the electrostatic field in this quadrupole setting underlies the understanding of Penning's ion trap~\cite{kretzschmar1991particle}.

\end{document}